\newcommand{\shortversion}[1]{}
\newcommand{\longversion}[1]{#1}
\date{}
\newtheorem{theorem}{Theorem}
\newtheorem{claim}{Claim}
\colorlet{MyBlue}{blue!50!black!100!}
\colorlet{MyRed}{red!50!black!100!}
\colorlet{MyGreen}{green!50!black!100!}
\newcommand{\note}[1]{}
\newcommand{\todo}[1]{}
\newcommand{\TTT}{\mathcal{T}}
\newcommand{\SSS}{\mathcal{S}}
\newcommand{\PPP}{\mathcal{P}}
\newcommand{\SB}{\{\,}%
\newcommand{\SM}{\;{:}\;}%
\newcommand{\SE}{\,\}}%
\newcommand{\Card}[1]{|#1|}
\let\phi=\varphi
\let\epsilon=\varepsilon
\newcommand{\ntxt}[1]{\text{{\normalfont #1}}}
\newcommand{\new}{\ntxt{new}}
\newcommand{\virt}{\ntxt{virt}}
\newcommand{\ext}{\ntxt{ext}}
\newcommand{\bnslim}{BN\nobreakdash-SLIM}
\newcommand{\kmax}{k\nobreakdash-MAX}
\newcommand{\kgreedy}{k\nobreakdash-greedy}
\newcommand{\kslim}{\bnslim(\kmax)}
\newcommand{\hc}{ETL\textsubscript{d}}   
\newcommand{\hcp}{ETL\textsubscript{p}}  
\newcommand{\hcb}{ETL}                   
\newcommand{\hcbslim}{\bnslim(\hcb)}
\newcommand{\mysubsection}[1]{\subsection{#1}}
\newcommand{\mysubsection}[1]{\medskip\noindent\textbf{#1}\quad}
\title{Turbocharging Treewidth-Bounded\\ Bayesian Network Structure
  Learning}
\author{%
Vaidyanathan Peruvemba Ramaswamy and  Stefan Szeider\\}
\title{Turbocharging Treewidth-Bounded\\ Bayesian Network Structure
	Learning\thanks{This is the full version of a paper to appear in
        the proceedings of AAAI-21, the Thirty-Fifth AAAI Conference on
        Artificial Intelligence. The authors acknowledge the support by the FWF
          (projects P32441 and W1255) and by the WWTF (project
          ICT19-065). }}
\author{%
Vaidyanathan P. R. and Stefan Szeider\\[4pt]
\small Algorithms and Complexity Group\\[-3pt]
\small TU Wien, Vienna, Austria\\[-3pt] 
\small \texttt{\{vaidyanathan,sz\}@ac.tuwien.ac.at}
}
\begin{document}
\thispagestyle{empty}

\maketitle

\begin{abstract}
We present a new approach for learning the structure of a
treewidth-bounded Bayesian Network~(BN). The key to our approach is
applying an exact method~(based on MaxSAT) locally, to improve the
score of a heuristically computed BN. This approach allows us to scale
the power of exact methods---so far only applicable to BNs with
several dozens of random variables---to large BNs with several thousands of
random variables.
Our experiments show that our method improves the score of
  BNs provided by state-of-the-art heuristic methods, often significantly.
\end{abstract}

\section{Introduction}
Bayesian network structure learning is the notoriously difficult
problem of discovering a Bayesian network (BN) that optimally
represents a given set of training data \cite{Chickering96}. Since
exact inference on a BN is exponential in the  BN's treewidth
\cite{KwisthoutBodlaenderGaag10}, one is particularly interested in
learning BNs of bounded treewidth.  However, learning a BN of bounded
treewidth that optimally fits the data (i.e., with the largest possible
score) is, in turn, an NP-hard task~\cite{KorhonenParviainen13}. This
predicament caused the research on treewidth-bounded BN structure
learning to split into two branches:
\begin{enumerate}[leftmargin=*,itemsep=0cm]
\item \emph{Heuristic Learning} (see, e.g.,
  \citet{ElidanGould08,NieCamposJi15,ScanagattaCoraniCamposZaffalon16,ScanagattaCZYK18,BenjumedaCamposLarranaga19}),
  which is scalable to large BNs with thousands of random variables but with a
  score that can be far from optimal, and
\item \emph{Exact Learning} (see, e.g.,
  \citet{BergJarvisaloMalone14,KorhonenParviainen13,ParviainenFarahaniLagergren14}),
  which learns optimal BNs but is scalable only to a few dozen
  random variables.
\end{enumerate}
In this paper, we combine heuristic and exact learning and take the
best of both worlds.

The basic idea for our approach is to first compute a BN  with
a  heuristic method (the \emph{global solver}), and then to
apply an exact method (the \emph{local solver}) to parts of the
heuristic solution. The parts are chosen small enough that they allow an optimal
solution reasonably quickly with the exact method. Although the basic idea sounds
compelling and reasonably simple, its realization requires several
conceptual contributions and new results.

For the global solver, any heuristic algorithm for treewidth-bounded
BN learning, such as the recent algorithms \kmax~\cite{ScanagattaCZYK18}
or \hcb~\cite{BenjumedaCamposLarranaga19}.
The local solver's task is significantly more complex than
treewidth-bounded BN structure learning, as several additional
constraints need to be incorporated. 
Namely, it is not sufficient that the BN computed by the local solver is
acyclic.  We need \emph{fortified acyclicity constraints} that prevent
cycles that run through the other parts of the BN, which have not been
changed by the local solver. Similarly, it is not sufficient that the
local BN is of bounded treewidth.  We need \emph{fortified treewidth
  constraints} that prevent the local BN from introducing links
between a diverse set of nodes that, together with the other parts of
the BN, which have not been changed by the local solver, increase the
treewidth.

Given these additional requirements, we propose a new local solver
\bnslim\ (\emph{SAT-based Local Improvement Method}), which satisfies the
fortified constraints.  We formulate a fortified version of the
treewidth-bounded BN structure learning problem. In
Theorem~\ref{the:slim}, we show that we can express the fortified
constraints with  certain \emph{virtual arcs} and
\emph{virtual edges}.  The virtual arcs represent directed paths that
run outside the local instance; with these virtual arcs we can ensure
fortified acyclicity. The \emph{virtual edges} represent essential
parts of a global tree decomposition using which we can ensure bounded
treewidth.

The new formulation of the local problem is well-suited to be
expressed as a MaxSAT (\emph{Maximum Satisfiability}) problem and
hence allows us to harvest the power of state-of-the-art MaxSAT
solvers (which received a significant performance gain over the last
decade).  A distinctive feature of our encoding is that, in contrast
to the virtual edges, the virtual arcs are conditional and depend on
the local solver's solution.
 
\mysubsection{Results}
We implement \bnslim\ and evaluate it empirically on a large set of
benchmark data sets, consisting between 64 and 10,000 random variables 
and for the treewidth bounds 2, 5, and 8. As the global solver, we use the
state-of-the-art heuristic algorithms for treewidth-bounded BN
learning \kmax~\cite{ScanagattaCZYK18}, and two variants of~\hcb\ \cite{BenjumedaCamposLarranaga19}.  \kmax\ improves over the
\kgreedy\ algorithm \cite{ScanagattaCoraniCamposZaffalon16}, which was
the first algorithm for treewidth-bounded structure learning that
scaled to thousands of random variables. The more recent algorithm \hcb\
is reported to perform better than \kmax\ in many
cases~\cite{BenjumedaCamposLarranaga19}.  

We consider about a hundred benchmark data sets based on real-world
and synthetic data sets, ranging up to 4000 random variables in our
experiments.  First we run the global solvers on the data sets,
followed by running \bnslim\ to improve the score of the DAG they
provided. Our results show that after running \bnslim\ for 5 minutes,
73\%
of all DAGs could be improved; 
by extending the time for \bnslim\ to 15 minutes, the improvement extends
to 82\%.
We also notice that, overall, \bnslim\ can improve the lower treewidth
DAGs more efficiently.

Since \kmax\ is an \emph{anytime} algorithm that can produce better
and better solutions over time, we can directly compare the
improvements achieved by \kmax\ after some initial run with the
improvements achieved by \bnslim. Our experiments show that after an
initial run of \kmax\ for 30 minutes, it is highly beneficial to stop
\kmax\ and hand the torch over to \bnslim, as \bnslim\ provides
improvements at a significantly higher rate. According to the
$\Delta$BIC metric, which was used by~\citet{ScanagattaCZYK18} for
comparing treewidth-bounded BN structure learning algorithms, the
results are ``extremely positive'' in favor of \bnslim\ over \kmax\ in
a vast majority of the experiments.

We cannot perform such a direct comparison between \hcb\ and
\bnslim, since the available implementation of \hcb\ does not
support an anytime run, but stops after a certain time.  Hence, we let
\hcb\ finish, and run \bnslim\ afterwards for 30 minutes. 
The achieved improvement in terms of the $\Delta$BIC metric is
``extremely positive'' for 84\% of all DAGs computed by two
  variants of \hcb. 

\mysubsection{Related work}
The first SAT-encoding for finding the treewidth of a graph was proposed
by~\citet{SamerVeith09}.  \citet{FichteLodhaSzeider17} proposed the first SAT-based local
improvement method for treewidth, using the Samer-Veith encoding as
the local solver. Recently, SAT encodings have been proposed for other
graph and hypergraph width measures \cite{%
  FichteHecherLodhaSzeider18, GanianLodhaOrdyniakSzeider19,
  LodhaOrdyniakSzeider17, SchidlerSzeider20}. 
So far, there have been four concrete approaches that use the SLIM framework,
one for branchwidth~\cite{LodhaOrdyniakSzeider16,LodhaOrdyniakSzeider19},
one for treewidth~\cite{FichteLodhaSzeider17}, one
for treedepth~\cite{VaidyanathanSzeider2020} and 
one for decision~trees~\cite{SchidlerSzeider21}.

Several exact approaches to treewidth-bounded BN structure learning
have been proposed.
\citet{KorhonenParviainen13} proposed a dynamic-programming approach,
and \citet{ParviainenFarahaniLagergren14} proposed a
Mixed-Integer Programming approach. 
\citet{BergJarvisaloMalone14} proposed a MaxSAT approach by
extending the basic Samer-Veith encoding for treewidth. Our approach
for \bnslim\ uses a similar general strategy, but we encode acyclicity 
differently. Moreover, \bnslim\  deals with the
fortified constraints in terms of virtual edges and virtual arcs.

Since the exact methods are limited to small domains, 
\citet{NieCamposJi15, NieCamposJi16} suggested heuristic approaches
that scale up to hundreds of random variables.  The \kgreedy\ algorithm
proposed by \citet{ScanagattaCoraniCamposZaffalon16}
at NIPS'16 provided a breakthrough, consistently yielding better DAGs
than its competitors and scaling up to several thousand of random variables.
As mentioned above, \kmax~\cite{ScanagattaCZYK18} is a more recent
improvement over \kgreedy.
More recently, \citet{BenjumedaCamposLarranaga19} came up with 
the~\hcb\ algorithms, based on local search
within the space of structures called elimination trees.
These algorithms perform better than k\nobreakdash-MAX and \kgreedy\ in many cases.

\section{Preliminaries}

\mysubsection{Structure learning}
We consider the problem of learning the structure (i.e., the~DAG) of a
BN from complete data set of~$N$ instances $D_1,\dots,D_N$ over a set
of~$n$ categorical random variables $X_1,\dots,X_n$. The goal is to
find a DAG~$D=(V,E)$ where~$V$ is the set of nodes (one for each
random variable) and~$E$ is the set of arcs (directed edges).
The value of a \emph{score function} determines how well a DAG~$D$ fits
the data; the DAG~$D$, together with local parameters, forms the BN \cite{KollerFriedman09}.

We assume that the score is \emph{decomposable}, i.e., being
constituted by the sum of the individual random variables' scores.
Hence we can assume that the score is given in terms of a \emph{score
  function $f$} that assigns each node $v\in V$ and each subset
$P \subseteq V\setminus \{v\}$ a real number $f_P(v)$, the
\emph{score} of~$P$ for~$v$.  The score of the entire DAG~$D=(V,E)$ is
then
\longversion{
\[ f(D) := \sum_{v\in V} f(v,P_D(v)) \]
}
\shortversion{
\( f(D) := \sum_{v\in V} f(v,P_D(v)) \) 
}
where $P_D(v)=\SB u\in V \SM (u,v)\in E \SE$ denotes the \emph{parent
  set} of~$v$ in~$D$. This setting accommodates several popular scores
like AIC, BDeu, and
BIC~\cite{Akaike74,HeckermanGeigerChickering95,Schwarz78}. If $P$ and
$P'$ are two potential parent sets of a random variable~$v$ such
that~$P \subsetneq P'$ and~$f(v, P') \leq f(v, P)$, then we can safely
disregard the potential parent set~$P'$ of~$v$.
Consequently, we can disregard all nonempty potential parent sets
of~$v$ with a score~$\leq f(v, \emptyset)$.  Such a restricted score
function is a \emph{score function cache}.
 
\mysubsection{Treewidth}
Treewidth is a graph invariant that provides a good indication of how
costly probabilistic inference on a BN~is. Treewidth is defined on
undirected graphs and applies to BNs via the \emph{moralized graph}
$M(D)=(V,E_M)$ of the DAG $D=(V,E)$ underlying the BN under
consideration, where
\(E_M=\SB \{ u, v \} \SM (u,v)\in E \SE \cup \SB \{u, v\} \SM
(u,w),(v,w)\in~E, u\neq~v \SE.\)

A \emph{tree decomposition}~$\TTT$ of a graph $G$ is a pair
$(T,\chi)$, where $T$ is a tree and $\chi$ is a function that assigns
each tree node $t$ a set $\chi(t)$ of vertices of~$G$ such that the
following conditions hold:
\begin{description}
\item[T1] For every edge $e$ of~$G$ there is a tree node $t$ such
  that $e\subseteq \chi(t)$.
\item[T2] For every vertex $v$ of~$G$, the set of tree nodes $t$
  with $v\in \chi(t)$ induces a non-empty subtree of~$T$.
\end{description}
The sets $\chi(t)$ are called \emph{bags} of the
decomposition~$\TTT$, and $\chi(t)$ is the bag associated with
the tree node~$t$. The \emph{width} of a tree decomposition $(T,\chi)$
is the size of a largest bag minus~$1$. The \emph{treewidth} of~$G$,
denoted by $\textup{tw}(G)$, is the minimum width over all tree
decompositions of~$G$.

The \emph{treewidth-bounded BN structure learning problem}
takes as input a set $V$ of nodes, a decomposable score
function $f$ on $V$, and an integer $W$, and it asks to compute a DAG
$D=(V,E)$ of treewidth $\leq W$, such that $f(D)$ is maximal.

\section{Local improvement}







Consider an instance $(V,f,W)$ of the treewidth-bounded BN structure learning
problem, and assume we
have computed  an \emph{initial solution} $D=(V,E)$ heuristically, together with a tree
decomposition $\TTT=(T,\chi)$ of width $\leq W$ of the moralized graph
$M(D)$.

We select a subtree $S\subseteq T$ such that the number of vertices in
$V_S:=\bigcup_{t\in V(S)} \chi(t)$ is at most some \emph{budget}~$B$.
The budget is a parameter that we specify beforehand, such that the
subinstance induced by $V_S$ is small enough to be solved
optimally by an exact method, which we call the \emph{local solver}.
The local solver computes for each $v\in V_S$ a new parent set,
optimizing the score of the resulting DAG $D^\new=(V,E^\new)$.

Consider the induced DAG $D_S^\new=(V_S,E_S^\new)$, where
$E_S^\new=\SB (u,v)\in E^\new\SM \{u,v\}\subseteq  V_S\SE$. The local solver ensures
that the following conditions are met:
\begin{enumerate}[align=left,font=\bfseries]
\item[C1] $D_S^\new$ is acyclic.
\item[C2] The moral graph $M(D_S^\new)$ has treewidth $\leq W$.
\end{enumerate}

\longversion{\pagebreak[2]}

We assume that the local solver certifies C2 by producing a tree
decomposition $\SSS^\new~=~(S^\new,\chi^\new)$ of~$M(D_S^\new)$ of
width~$\leq W$, which can be used by the global solver.

The two conditions stated above are not sufficient to ensure that
$D^\new$ is acyclic and that treewidth of~$M(D^\new)$ remains bounded
by $W$. Acyclicity can be violated by cycles formed by the combination
of the new incoming arcs of vertices in $S$ together with old arcs
that are kept from $D$. The treewidth can increase by a number
that is linear in $\Card{V_S}$.
 
Hence, we need additional side conditions, which we will formulate
using the following  additional concepts.

Let us call a vertex $v\in V_S$ a \emph{boundary vertex} if there
exists a tree node $t\in V(T)\setminus V(S)$ such that $v\in \chi(t)$,
i.e., it occurs in some bag outside~$S$. We call the other vertices in $V_S$
\emph{internal vertices}, and the vertices in $V\setminus V_S$
\emph{external vertices}.  Further, we call two boundary
vertices $v,v'$ \emph{adjacent} if there exists a tree node
$t\in V(T)\setminus V(S)$ such that $v,v'\in \chi(t)$, i.e., both
vertices occur together in some bag outside~$S$. It is easy to see
that any pair of adjacent boundary vertices occur together in a bag of
$S$ as well.

For any two adjacent boundary vertices $v,v'$, we call $\{v,v'\}$ a
\emph{virtual edge}. 
Let $E_\virt$ be the set of all virtual edges.
These virtual edges form a clique and serve a similar purpose
as the marker cliques used in other work~\cite{FichteLodhaSzeider17}.
The \emph{extended moral graph} $M_\ext=(V_S,E_\ext)$ is obtained
from $M(D_S^\new)$ by adding all virtual edges.

For any two adjacent boundary vertices $v,v'$, we call $(v',v)$ a
\emph{virtual arc}, if $D^\new$ contains a directed path from $v'$ to
$v$, where all the vertices on the path, except for $v'$ and $v$, are
external. Let $E_\virt^\rightarrow$ be the set of all virtual arcs.
 

We can now formulate the side conditions.
\begin{enumerate}[align=left,font=\bfseries]
\item[C3] $\SSS^\new$ is a tree decomposition of the extended moral graph~$M_\ext$.
\item[C4] For each $v\in V_S$, if $P_{D^\new}(v)$ contains external
  vertices, then there is some $t\in V(T)\setminus V(S)$ such that
  $P_{D^\new}(v) \cup \{v\} \subseteq \chi(t)$.
\item[C5] The digraph $(V_S,E_S^\new \cup E_\virt^\rightarrow)$ is acyclic.
\end{enumerate}
We note that condition C4 implies that in $D^\new$, all parents of an
internal vertex are internal.

 
\begin{figure}[htbp]
	\centering
	\includegraphics[width=.6\linewidth]{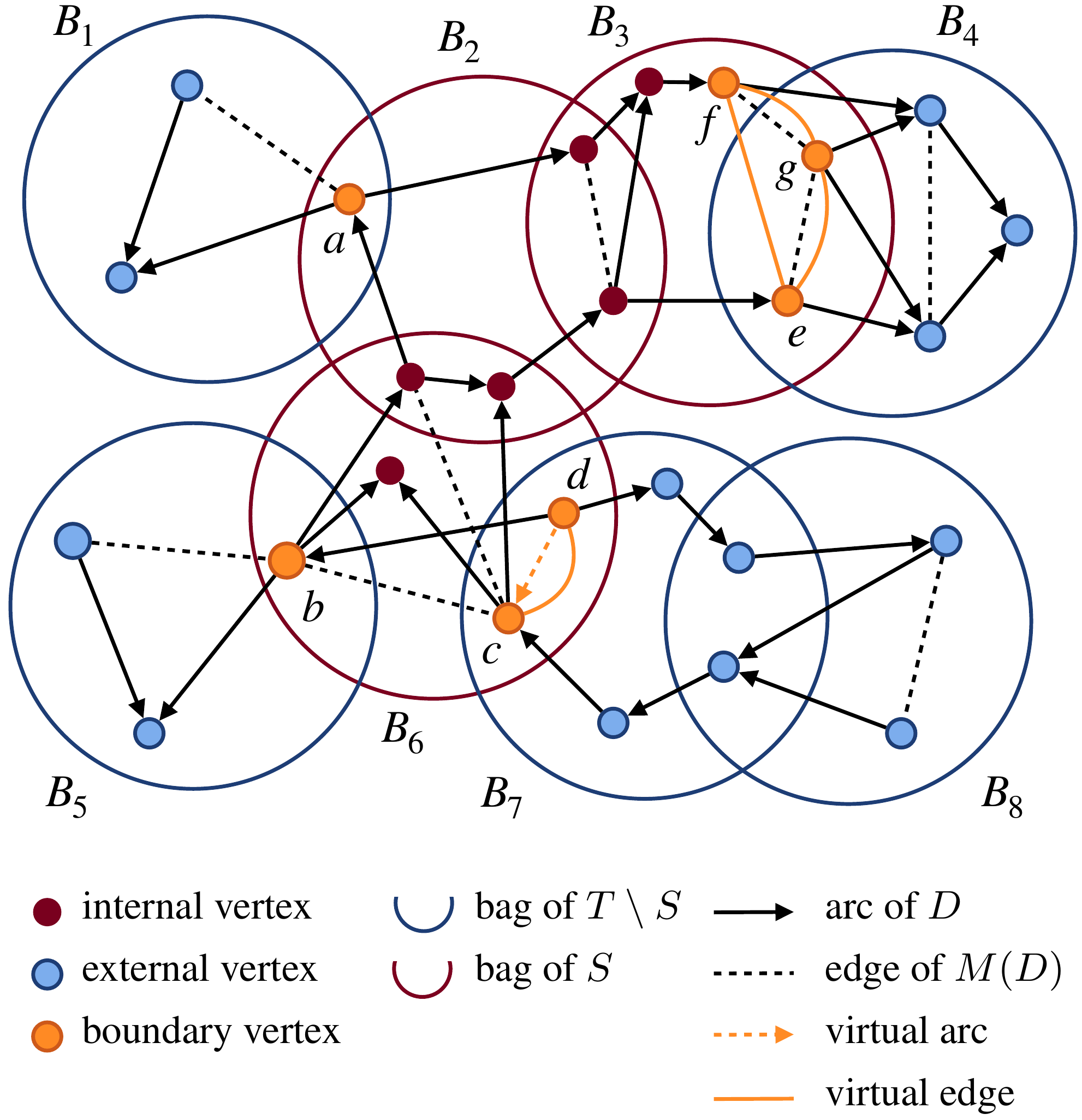}
	\caption{Illustration for Theorem~\ref{the:slim}. The large
          circles $B_1,\dots,B_8$ represent the bags of~$T$, where $B_2,B_3,B_6$
          belong to $S$. The boundary vertices are $a,\dots,g$, where
          $c,d$ are adjacent, and $e,f,g$ are mutually adjacent. Since
          there is a directed path from $d$ to $c$ using external
          vertices from the bags $B_7$ and $B_8$, there is a virtual arc
          from $d$ to $c$.}
\end{figure} 

\shortversion{\vspace{-10pt}}
\begin{theorem}\label{the:slim}
  If all the conditions C1--C5 are satisfied, then $D^\new$ is acyclic,
  the treewidth of~$M(D^\new)$ is at most $W$, and the score of~$D^\new$ is
  at least the score of~$D$. 
\end{theorem}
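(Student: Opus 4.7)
The plan is to prove the three conclusions (score, acyclicity, bounded treewidth) separately, each anchored on the structure of~$\TTT$ together with~C1--C5. The hardest content is concentrated in the acyclicity step.

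For the score inequality, decomposability and the fact that $P_{D^\new}(v)=P_D(v)$ for $v\notin V_S$ reduce the claim to $\sum_{v\in V_S} f(v,P_{D^\new}(v))\ge\sum_{v\in V_S} f(v,P_D(v))$. I would then point out that the original local assignment $(P_D(v))_{v\in V_S}$ is itself a C1--C5-feasible input for the local solver: C1 and C5 follow from acyclicity of~$D$, C4 from the fact that the clique $P_D(v)\cup\{v\}$ of $M(D)$ sits in a common bag of~$\TTT$ (necessarily outside $V(S)$ whenever it contains an external vertex), and C2--C3 from taking $\SSS^\new=(S,\chi|_{V(S)})$, whose validity for~$M_\ext$ follows by applying the Helly property to the triples $T_u,T_v,V(S)$. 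An optimizing local solver therefore cannot decrease the score.

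For acyclicity I would argue by contradiction. A preliminary observation shows that every arc of $D^\new$ between $V_S$ and $V\setminus V_S$ has its $V_S$-endpoint on the boundary: for an outgoing arc $v\to u$ into an external~$u$ the parent set of~$u$ is unchanged, so $\{v,u\}\in M(D)$ lies in a bag of~$\TTT$ that contains the external~$u$ and hence lies outside $V(S)$; for an incoming arc $u\to v$ with external~$u$, C4 directly places $\{u,v\}$ in such a bag. Any cycle~$C$ of~$D^\new$ therefore touches $V_S$ only through boundary vertices, and it decomposes into subpaths inside $V_S$ (governed by~C1) alternating with external subpaths $v\to w_1\to\cdots\to w_{m-1}\to v'$ in which $v,v'$ are boundary and the $w_i$ are external. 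A fully internal cycle contradicts~C1 and a fully external cycle contradicts acyclicity of~$D$, so the remaining task --- and the main technical obstacle --- is to show that each such external subpath certifies a virtual arc $(v,v')\in E_\virt^\rightarrow$; contracting these subpaths then yields a cycle in $(V_S,E_S^\new\cup E_\virt^\rightarrow)$, contradicting~C5. Each consecutive pair on an external subpath lies in a common bag of~$\TTT$ --- by~T1 for the interior pairs and by~C4 for the final pair $\{w_{m-1},v'\}$ --- and every such bag contains an external vertex, hence lies in $V(T)\setminus V(S)$. The subtrees $T_v,T_{w_1},\ldots,T_{v'}$ therefore form a connected chain whose $w_i$-members, being subtrees contained in the forest $V(T)\setminus V(S)$, all lie in a single component~$T''$. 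The key structural fact is that each component of $V(T)\setminus V(S)$ attaches to $V(S)$ through a unique edge of~$T$, so its root~$r$ is the only node of~$T''$ adjacent to $V(S)$, and any subtree of~$T$ meeting both $V(S)$ and~$T''$ must contain~$r$. Applied to $T_v$ and~$T_{v'}$ this gives $r\in T_v\cap T_{v'}$, hence $v,v'\in\chi(r)$ with $r\notin V(S)$: they are adjacent, as required.

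For the treewidth bound I would construct a decomposition of~$M(D^\new)$ by gluing $\SSS^\new$ onto~$\TTT$ with the subtree~$S$ excised. For each component of $V(T)\setminus V(S)$ let~$r$ be its unique root in~$T$; the set $\chi(r)\cap V_S$ consists of pairwise adjacent boundary vertices and therefore forms a clique in~$M_\ext$, so by~C3 some bag $t_r^*\in\SSS^\new$ contains $\chi(r)\cap V_S$, and I would attach~$r$ to~$t_r^*$ by a new tree edge. Verifying~T1 becomes a case split on an edge $\{a,b\}$ of $M(D^\new)$: edges inside $V_S$ are either in $M(D_S^\new)$ (covered by~$\SSS^\new$) or arise by moralization from a common external child~$c$, in which case Helly on~$T_a,T_b,T_c$ produces a bag outside $V(S)$ containing~$\{a,b\}$, making it a virtual edge covered by~C3; edges touching an external vertex are covered either directly by~$\TTT$ or via~C4. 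Verifying~T2 reuses the unique-root observation: for a boundary vertex~$v$, every component of $V(T)\setminus V(S)$ meeting $T_v$ contains its root, which is glued to a bag of~$\SSS^\new$ containing~$v$, so the $v$-bags across components are stitched together by the $v$-subtree inside~$\SSS^\new$. Widths stay $\le W$ because every bag originates in~$\TTT$ or~$\SSS^\new$.
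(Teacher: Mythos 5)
Your proposal is correct and follows essentially the same route as the paper's proof: the treewidth bound via gluing $\SSS^\new$ onto the components of $T\setminus V(S)$ using the clique property/C3, acyclicity by decomposing a hypothetical cycle into internal and external subpaths and contracting the latter into virtual arcs to contradict C5, and the score bound by observing that $D$ itself is a C1--C5-feasible solution. You actually supply a few details the paper leaves implicit (the unique-attachment-edge argument for why the endpoints of an external subpath are adjacent boundary vertices, and the Helly-property justifications), but the underlying argument is the same.
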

\begin{proof}
  We define a new tree decomposition $\TTT^\new=(T^\new,\chi^\new)$ of~$M(D^\new)$
   as follows. Let $T_1,\dots,T_r$ be the connected
  components of~$T\setminus V(S)$, i.e., the $T_i$'s are the subtrees
  of~$T$ that we get when deleting the subtree $S$. Let
  $V_i=\bigcup_{t\in V(T_i)}\chi(t)$, $1\leq i \leq r$, and observe that
  each external vertex $x$ belongs to exactly one of the sets
  $V_1,\dots,V_r$. Let $B_i=V_S\cap V_i$, $1\leq i \leq r$, be the set of
  boundary vertices in $V_i$. We observe that all the vertices in
  $B_i$ are mutually adjacent boundary vertices and occur together in
  a bag~$\chi(s_i)$ of~$s_i\in V(S)$ and in a bag $\chi(t_i)$, for
  $t_i\in V(T_i)$, as we can take $s_i$ and $t_i$ to be the two neighboring
  tree nodes of~$T$ with $s_i\in V(S)$ and $t_i\in V(T_i)$. We also
  observe that each $B_i$ forms a clique in the extended moral graph
  $M_\ext$.

  Recall that by assumption, the local solver provides a tree
  decomposition $\SSS^\new=(S^\new,\chi^\new)$ of~$D_S^\new=(V_S,E_S^\new)$ of width
  $\leq W$.  Additionally, by condition C3, $\SSS^\new$ is also a tree
  decomposition of~$M_\ext$, and hence, by a basic property of tree
  decompositions (see, e.g., \citet[Lem.~3.1]{BodlaenderMoehring93}),
  there must exist a bag $\chi^\new(s_i^*)$, $s_i^*\in V(S^\new)$, with
  $B_i\subseteq \chi^\new(s_i^*)$.  Hence we can define~$T^\new$ as the tree
  we get by connecting the disjoint trees $S^\new,T_1,\dots,T_r$ with the
  edges $\{s_i^*,t_i\}$, $1\leq i \leq r$. We extend $\chi^\new$ from
  $V(S^\new)$ to $V(T^\new)$ by setting $\chi^\new(t)=\chi(t)$ for
  $t\in \bigcup_{i=1}^r V(T_i)$.

\longversion{
  \begin{claim}\label{claim:width}
  	$\TTT^\new=(T^\new,\chi^\new)$ is a tree decomposition of~$M(D^\new)$ of
  	width $\leq W$.
  \end{claim}
}
\shortversion{
  \emph{Claim 1:
     $\TTT^\new=(T^\new,\chi^\new)$ is a tree decomposition of~$M(D^\new)$ of
    width $\leq W$.}
}
    To prove the claim, we show that $\TTT^\new$ satisfies the conditions
  T1 and T2.
  
  \emph{Condition T1}. There are two reasons for an
  edge $\{u,v\}$ to belong to $M(D^\new)$: first, because of an arc
  $(u,v)\in E^\new$ and second, because of two arcs $(u,w),(v,w)\in
  E^\new$. 
  \emph{First case: $(u,v)\in E^\new$}. If $u$ and $v$ are both
  external, then $\{u,v\}\subseteq \chi(t)=\chi^\new(t)$ for some
  $t\in V(T^\new)\setminus V(S^\new)= V(T)\setminus V(S)$. If neither
  $u$ nor $v$ is external, then $(u,v)\in E_S^\new$, and since
  $S^\new$ is a tree decomposition of~$D_S^\new$,
  $\{u,v\}\subseteq \chi^\new(s)$ for some $s\in V(S^\new)$.  If $v$
  is external but $u$ isn't, then the arc $(u,v)$ was already present
  in $D$, as the parents of external vertices didn't change. Hence,
  since $\TTT$ is a tree decomposition of~$M(D)$, it follows that
  $\{u,v\}\subseteq \chi(t)=\chi^\new(t)$ for some
  $t\in V(T^\new)\setminus V(S^\new)= V(T^\new)\setminus V(S)$.  If $u$
  is external but $v$ isn't, it follows from C4 that
  $\{u,v\}\subseteq \chi(t)=\chi^\new(t)$ for some
  $t\in V(T^\new)\setminus V(S^\new)= V(T^\new)\setminus V(S)$.
  \emph{Second case: $(u,w),(v,w)\in E^\new$}.  If $u,v,w \in V_S$,
  then $\{u,v\}\in E(M(D_S^\new))$, and so $\{u,v\}\subseteq \chi^\new(s)$ for
  some $s\in V(S^\new)$, since $\SSS^\new$ is a tree decomposition of
  $M(D_S^\new)$.  If $w\in V_S$ but $u \notin V_S$ or $v \notin V_S$, then
  C4 implies that $\{u,v\}\subseteq \chi(t)=\chi^\new(t)$ for some
  $t\in V(T^\new)\setminus V(S^\new)= V(T^\new)\setminus V(S)$.  If
  $w\notin V_S$, then $u,v$ are two adjacent boundary vertices, hence
  $\{u,v\}$ is a virtual edge which, by C3, means
  $\{u,v\}\subseteq \chi^\new(s)$ for some $s\in V(S^\new)$.  We conclude
  that T1~holds.

  \emph{Condition T2}. Let $v \in V$. If $v$ is external, then there
  is exactly one $i\in \{1,\dots, r\}$, such that
  $v\in V_i=\bigcup_{t\in V(T_i)}\chi(t)$.  Since we do not change the
  tree decomposition of~$T_i$, condition T2 carries over from $\TTT$
  to~$\TTT^\new$.  Similarly, if $v$ is internal, then $v$ does not
  appear in any bag $\chi^\new(t)$ for
  $t\in V(T^\new)\setminus V(S^\new)$, hence condition T2 carries over
  from $\SSS^\new$ to $\TTT^\new$. It remains to consider the case
  where $v$ is a boundary vertex.  The tree nodes $t\in~V(S^\new)$ with
  $v\in \chi(t)$ are connected, because $\SSS^\new$ satisfies T2, and
  for~$B_i: v\in B_i$, the tree nodes $t\in V(T_i)$ for which
  $v\in \chi(t)$ are connected, since $\TTT$ satisfies T2.  By
  construction of~$T^\new$, if $v\in B_i$, then there are neighboring
  tree nodes $s_i^*\in V(S^\new)$ and $t_i \in V(T_i)$ with
  $v\in \chi^\new(s_i^*)\cap \chi^\new(t_i)$. Hence all the tree nodes
  $t\in V(T^\new)$ with $v\in \chi(t)$ are connected, and T2 also holds
  for boundary vertices.

  To conclude the proof of the claim, 
  it remains to observe the width of~$\TTT^\new$ cannot exceed the
  widths of~$\TTT$ or $\SSS^\new$, hence the width of~$\TTT^\new$ is
  at most~$W$.

\longversion{
 \begin{claim}\label{claim:acyc}
   $D^\new$ is acyclic.
\end{claim}
}
\shortversion{
 \emph{Claim 2:
    $D^\new$ is acyclic.}
}
  To prove the claim, suppose to the contrary that $D$ contains a
  directed cycle $C=(V(C),E(C))$. The cycle cannot lie entirely in
  $D_S^\new$, nor can it lie entirely in $D^\new-V_S=D-V_S$, because
  $D_S^\new$ and $D$ are acyclic. Hence, $C$ contains at least one arc
  from $V_S \times (V\setminus V_S)$ and at least one arc from
  $ (V\setminus V_S) \times V_S$.  Let
  $(v_j,x_j) \in E(C) \cap (V_S \times (V\setminus~V_S))$ and
  $(x_j',v_j') \in E(C) \cap ((V\setminus V_S) \times V_S)$, for
  $0\leq j \leq p$, be these arcs, such that they appear on $C$ in the
  order $(v_0',x_0')$, $(x_0,v_0)$, $(v_1',x_1'),\dots,(v_p',x_p')$,
  $(x_p,v_p)$. It is possible that $x_j'=x_j$ or $v_j = v_{j+1}'$.  We
  observe that the vertices on the path from $x_j'$ to~$x_j$ on $C$
  all belong to some $V_i = \bigcup_{t\in V(T_i)}\chi(t)$. Hence $v_j$
  and $v_j'$ are adjacent boundary vertices, and $E_\virt^\rightarrow$
  contains all the arcs $(v_j',v_j)$, $1\leq j \leq p$. However, the
  cycle $C$ contains also the paths from $v_j$ to $v'_{j+1 \pmod p}$,
  for~$1\leq j \leq p$, which only run through vertices in
  $V_S$. These paths, together with the virtual arcs $(v_j',v_j)$ form
  a cycle $C'$ which lies in $(V_S,E_S^\new \cup
  E_\virt^\rightarrow)$. This contradicts C5 which requires that this
  digraph be acyclic. Hence the claim holds. 

\longversion{
	\begin{claim}
		The score of~$D^\new$ is at least the score of~$D$.
	\end{claim}
}
\shortversion{
 \emph{Claim 3:
    The score of~$D^\new$ is at least the score of~$D$.}
}
 We observe that by taking $D^\new = D$ we have a solution that
 satisfies all the required conditions and maintains the score.
 \shortversion{\looseness=-1}  
\end{proof}


\section{Implementing the local improvement}


\newcommand{\Avirt}[2]{A_\virt^\rightarrow(#1, #2)}

In this section, we first discuss how the set $S$ representing the
subinstance is constructed. Then we provide a detailed explanation of
the MaxSAT encoding that is responsible for solving the subinstance.

\mysubsection{Constructing the subinstance}
For this section, we follow the same notation as used in the previous section.
To construct the subinstance, we initialize the subtree $S$ with a tree node $r$ 
picked at random from $V(T)$. We then expand $S$ by performing a bread-first search
from $V(S)$ and adding a new tree node to $S$ as long as the size of~$V_S$ does not exceed
the budget. Next, we compute $E_\virt$ for the chosen $S$.
Finally, we prune the parent sets of each vertex so as to only retain those 
parent sets which satisfy conditions C3 and C4. This can be done by 
first checking, for each parent set, if the required tree node $t$ is present 
$V(T)\setminus V(S)$, and if it does, we record the set of virtual arcs 
that are imposed by this parent set as long as none of the virtual arcs
are self-loops. For each~$v \in S$ and $P \in \PPP_v$, we denote by $\Avirt vP$
the set of imposed virtual arcs when $v$ has the parent set~$P$ in $D^\new$.
We denote by $\PPP_v$, the collection of parent sets of 
node $v$ that remain after this pruning process. Notice that, under  this pruning,
all remaining parent sets $P \in \PPP_v$ satisfy C4. 
Also note that, since~$E_\virt^\rightarrow$ is conditional on the chosen parent
sets, it cannot be precomputed.

Further, since we intend to solve the subinstance using a MaxSAT encoding,
we need to ensure that the score of each parent set is non-negative. 
Recall that $\PPP_v$ only contains those non-empty parent sets
whose score is at least that of the empty parent set. Thus, we may assume that
the empty parent set has the lowest score among all the parents of a certain
vertex. Consequently, we can adjust the score function  
by setting $f'_P(v) = f_P(v) - f_\emptyset(v)$ for $v \in S$ and $P \in \PPP_v$,
which implies that $f'_P(v) \geq 0$ for all $v \in S$ and $P \in \PPP_v$.

\newcommand{\ps}[2]{\ntxt{par}_#1^#2}  
\newcommand{\acyc}[2]{\ntxt{acyc}_{#1,#2}}
\newcommand{\acycs}[2]{\ntxt{acyc}^*_{#1,#2}}
\newcommand{\ord}[2]{\ntxt{ord}_{#1,#2}}
\newcommand{\ords}[2]{\ntxt{ord}^*_{#1,#2}}
\newcommand{\arc}[2]{\ntxt{arc}_{#1,#2}}

\mysubsection{MaxSAT encoding}
We now describe the weighted partial MaxSAT instance that encodes 
conditions C1--C5.
We build on top of the SAT encoding proposed by \citet{SamerVeith09}.
The only difference in our case is that there are no explicit edges and hence we do
not require the corresponding clauses. Instead, the edges of the moralized graph
are dependent on and decided by other variables that govern the DAG structure.
For convenience, let $n$ denote the size of the subinstance, i.e., $n := \Card{S}$.
A part of the encoding is based on the elimination ordering of a tree 
decomposition (see, e.g.,~\citet[Sec.~2]{SamerVeith09}).

The main variables used in our encoding are
\begin{itemize}[leftmargin=*,nosep]
  \item variables $\ps vP$ represent for each node $v \in S$
        the chosen parent set $P$,
  \item $n(n-1)/2$ variables $\acyc uv$ represent the topological ordering
        of~$D_S^\new$,
  \item $n(n-1)/2$ variables $\ord uv$ represent the elimination ordering
        of the tree decomposition,
  \item $n^2$ variables $\arc uv$ represent the arcs in the moralized graph $M_\ext$,
        along with the \emph{fill-in edges}~(see~\citet{SamerVeith09}).
        \note{add prelims about elim order to explain fill-in edges?}
\end{itemize}

\shortversion{  
\setlength{\abovedisplayskip}{3pt}
\setlength{\belowdisplayskip}{4pt minus 1pt}
\setlength{\abovedisplayshortskip}{0pt}
\setlength{\belowdisplayshortskip}{1pt plus 1pt minus 1pt}
}
Since $\acyc uv$ and $\ord uv$ represent linear orderings, we enforce 
transitivity of these variables by means of the clauses 
\begin{equation*}\label{eqn:trans}
\left.\begin{aligned}
  (\acycs uv \wedge \acycs vw) &\rightarrow \acycs uw \\
  (\ords uv \wedge \ords vw) &\rightarrow \ords uw
\end{aligned}\right\}
  \text{ for distinct } u, v, w \in S.
\end{equation*}
To prevent self-loops in the moralized graph, we add the clauses
\begin{equation*}\label{eq:no-self-loops}
  \neg \arc vv \quad \text{ for } v \in S.
\end{equation*}
 
For each node $v \in S$, and parent set $P \in \PPP_v$, the variable $\ps vP$ is true 
if and only if $P$ is the parent set of~$v$. Since each node must have 
exactly one parent set, we introduce the cardinality constraint
\longversion{
\begin{equation*}
  \sum_{P \in \PPP_v} \ps vP = 1 \text{ for } v \in S.
\end{equation*}
}
\shortversion{
  $\sum_{P \in \PPP_v} \ps vP = 1$  for $v \in S$.
}

Next, for each node $v$, parent set $P$, and $u \in P$, if $P$ is 
the parent set of~$v$ then $u$ must precede $v$ in the topological ordering.
Hence we add the clause
\begin{equation*}\label{eq:par->acyc}
  \ps vP \rightarrow \acyc uv \quad \text{for } v \in S, P \in \PPP_v,
      \text{ and } u \in P.
\end{equation*}
Similarly, for each node $v$, parent set $P$, and $u \in P$, if $P$ is
the parent set of~$v$ then we must add an arc in the moralized graph 
respecting the elimination ordering between $u$ and $v$, as follows:
\begin{equation*}\label{eq:par^ord->arc}
\left.\begin{aligned}
  (\ps vP \wedge \ord uv) &\rightarrow \arc uv \\
  (\ps vP \wedge \ord vu) &\rightarrow \arc vu
  \end{aligned}\right\}
  \begin{aligned}\quad
      \text{for } v \in S, P \in \PPP_v,\\
      \text{ and } u \in P.
  \end{aligned}
\end{equation*}

Next, we encode the moralization by adding an arc between every pair of 
parents of a node, using the following clauses
\begin{equation*}\label{eq:par^ord->moral}
\left.\begin{aligned}
  (\ps vP \wedge \ord uw) &\rightarrow \arc uw \\
  (\ps vP \wedge \ord wu) &\rightarrow \arc wu
  \end{aligned}\right\}
  \begin{aligned}\quad
      \text{for } v \in S, P \in \PPP_v,\\
      \text{ and } u,w \in P.
  \end{aligned}
\end{equation*}
Now, we encode the fill-in edges, with the following clauses
\begin{equation*}\label{eq:fillin}
\left.\begin{aligned}
  (\arc uv \wedge \arc uw \wedge \ord vw) \rightarrow \arc vw \\
  (\arc uv \wedge \arc uw \wedge \ord wv) \rightarrow \arc wv
  \end{aligned}\right\}
      \quad\text{for } u,v,w \in S.
\end{equation*}
Lastly, to bound the treewidth, we add a cardinality constraint
on the number of outgoing arcs for each node as follows
\begin{equation*}\label{eq:width}
  \sum\nolimits_{w \in S, w \neq v} \arc vw \leq W \quad \text{for } v \in S.
\end{equation*}
To complete the basic encoding, for every node $v \in S$, 
and every parent set $P \in \PPP_v$ we add a soft clause weighted by 
the score of the parent set as follows
\begin{equation*}\label{eq:soft-clause}
  (\ps vP):\text{weight } f'_P(v) \quad \text{for } v \in S, P \in \PPP_v.
\end{equation*}

To speed up the solving, we encode that for every pair of nodes, at
most one of the arcs between them can exist. We add the following
redundant clauses
\begin{equation*}\label{eq:speedup}
  \neg \arc uv \vee \neg \arc vu \quad \text{for } u,v \in S.
\end{equation*}

Now, we describe the additional clauses required to satisfy the fortified
constraints, and thus conditions~C3 and~C5.
For every virtual edge
$\{u, v\} \in E_\virt$, we introduce a forced arc depending on the elimination
ordering using the following pair of clauses
\begin{equation*}\label{eq:virtual-edge}
  \ords uv \rightarrow \arc uv \wedge  
  \ords vu \rightarrow \arc vu \quad \text{for } \{u,v\}\in E_\virt.
\end{equation*}
This takes care of the fortified treewidth constraints, satisfying C3 
and ensuring that the edge $\{u, v\} \subseteq \chi(s)$ for some $s \in V(S)$.
Finally, we add the clauses that encode the forced arcs $E_\virt^\rightarrow$.
For each $v \in S$, $P \in \PPP_v$, and $(u,v) \in \Avirt vP$, we add the clause
\begin{equation*}\label{eq:virtual-arc}
  \ps vP \rightarrow \acycs uv, 
\end{equation*}
which forces the virtual arc $(u,v)$ if $P$ is the parent set of~$v$ in $D^\new$,
thereby handling the fortified acyclicity constraints and ensuring that C5 is
satisfied.
 
This concludes the definition of the MaxSAT instance, to which we will
refer as $\Phi_{D,f}(S)$. We refer to the weight of a satisfying
assignment $\tau$ of~$\Phi_{D,f}(S)$ as the sum of the weights of all
the soft clauses satisfied by~$\tau$.  Let
$\alpha(S):= \sum_{v \in S}f_\emptyset(v)$.
To each satisfying assignment $\tau$ of~$\Phi_{D,f}(S)$ we can
associate for each~$v\in V$ the corresponding parent set, which in
turn determines a directed graph~$D^\new$.  Due to
Theorem~\ref{the:slim}, the treewidth of~$M(D^\new)$ is bounded by
$W$, and $D^\new$  is acyclic. \longversion{\pagebreak[1]} By construction of
$\Phi_{D,f}(S)$, the weight of~$\tau$ equals
$\sum_{v\in S} f'_P(v)= f(D_S^\new) - \alpha(S)$. Conversely, if we
pick new parent sets for the vertices in $S$ such that all the
conditions C1--C5 are satisfied, then by construction of
$\Phi_{D,f}(S)$, the corresponding truth assignment $\tau$ satisfies
$\Phi_{D,f}(S)$, and its weight is
$\sum_{v\in S} f'_P(v)= f(D_S^\new) - \alpha(S)$. 
In particular, let $K_0$ be the weight of the truth assignment which
corresponds to the parent sets of~$S$ as defined by the input DAG $D$.
We summarize these
observations in the following theorem.

\begin{theorem}
$\Phi_{D,f}(S)$ has a solution of weight $K$ if and only if
there are new parent sets for the vertices in~$S$ giving rise to a DAG
$D^\new$ with $f(D^\new)-f(D)=  K-K_0$.
 \end{theorem}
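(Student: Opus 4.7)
The plan is to prove both directions of the equivalence by explicitly exhibiting the correspondence between satisfying assignments of $\Phi_{D,f}(S)$ and choices of new parent sets for the vertices in $S$, and then tracking weights and scores on both sides. Most of the structural heavy lifting has already been done: the preceding discussion and Theorem~\ref{the:slim} give us that any assignment encoding valid parent sets yields an acyclic $D^\new$ whose moralized graph has treewidth at most $W$.

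For the forward direction, I would start from a satisfying assignment $\tau$ of weight $K$. The exactly-one cardinality constraint on the $\ps vP$ variables picks a unique parent set $P_\tau(v) \in \PPP_v$ for each $v \in S$; parent sets of vertices outside $S$ remain as in $D$. This defines $D^\new$. I would then show that $D^\new$ satisfies conditions C1--C5: C4 follows from the pruning step that kept only parent sets compatible with an outside bag, the transitivity clauses on $\acyc uv$ together with the $\ps vP \rightarrow \acyc uv$ clauses yield a topological ordering on $D_S^\new$ (hence C1), the virtual-arc clauses $\ps vP \rightarrow \acycs uv$ together with this ordering give C5, the virtual-edge clauses and the arc/fill-in/width clauses produce an elimination ordering of width $\leq W$ of the extended moral graph, giving C2 and C3. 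By Theorem~\ref{the:slim}, $D^\new$ is a valid DAG of treewidth $\leq W$. Finally, by the definition of soft clauses, $K = \sum_{v \in S} f'_{P_\tau(v)}(v) = \sum_{v \in S} f_{P_\tau(v)}(v) - \alpha(S) = f(D_S^\new) - \alpha(S)$, and since $D$ and $D^\new$ agree outside $S$, $f(D^\new) - f(D) = f(D_S^\new) - f(D_S) = K - K_0$.

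For the backward direction, I would start from any choice of new parent sets satisfying C1--C5 and construct the canonical truth assignment $\tau$: set $\ps vP$ to true precisely for the chosen parent set of each $v \in S$, set $\acyc uv$ according to any topological ordering of $D_S^\new$ extended (via C5) to also respect the virtual arcs, set $\ord uv$ according to an elimination ordering of $\SSS^\new$ (which exists by C2/C3), and set $\arc uv$ to true exactly for the arcs of the moralized-plus-virtual-edge graph together with the fill-in arcs induced by the elimination ordering. I would then verify that every hard clause is satisfied; the checks are mechanical given the choices above, with the width clause relying on the standard fact that the out-degree of a vertex in the fill-in directed graph equals its bag size in the induced tree decomposition minus one. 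The weight computation mirrors the forward direction: $\tau$ satisfies exactly the soft clauses corresponding to the chosen parent sets, so its weight is $\sum_{v \in S} f'_{P(v)}(v) = f(D_S^\new) - \alpha(S)$, which equals $K_0 + (f(D^\new) - f(D))$.

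The main obstacle, and the only place where anything nontrivial happens, is the bookkeeping in the backward direction ensuring that $\arc{}{}$ and $\ord{}{}$ can be set consistently so that all of the fill-in, moralization, virtual-edge, and width clauses are simultaneously satisfied. This amounts to invoking the standard equivalence between tree decompositions of width $\leq W$ and elimination orderings in which every vertex has at most $W$ higher-ranked neighbors in the fill-in graph, applied to the extended moral graph $M_\ext$ rather than $M(D_S^\new)$; the existence of the required ordering is exactly what C3 guarantees.
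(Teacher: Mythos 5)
Your proposal is correct and follows essentially the same route as the paper, which proves the theorem by observing a weight-preserving correspondence between satisfying assignments and parent-set choices meeting C1--C5 and then invoking Theorem~\ref{the:slim}; the paper leaves both directions at the level of ``by construction of $\Phi_{D,f}(S)$,'' whereas you spell out the details, in particular the backward direction's reliance on the standard equivalence between width-$W$ tree decompositions of $M_\ext$ and elimination orderings with bounded fill-in out-degree. Your weight bookkeeping ($K = f(D_S^\new) - \alpha(S)$, hence $f(D^\new)-f(D) = K - K_0$) matches the paper's exactly.
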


\section{Experimental evaluation}
\newcommand{\fnurl}[1]{\url{#1}}
\newif\ifpadhead
\padheadtrue

\longversion{In this section, we describe  the experiments conducted to
analyze the performance of the local improvement algorithm.
}
The current
state-of-the-art heuristic algorithms for solving the treewidth-bounded 
BN structure learning problem
are the \kmax\ algorithm by~\citet{ScanagattaCZYK18} and
the \hcb\ algorithms by~\citet{BenjumedaCamposLarranaga19}
(available as two variants--the default variant~\hc\ and the poly-time
variant~\hcp),
therefore, we analyze the benefit of applying \bnslim\ on top of these algorithms. 
It is worth noting that both \kmax\ and \bnslim\ are anytime algorithms,
i.e., they run indefinitely long and can be halted at any instant to output
the best solution found so far;
\hcb, on the other hand, as per the available implementation, is deterministic
and terminates when it fails to find any new improvements.
This distinction affects the nature of the experiments conducted to draw a 
comparison between the different algorithms.
However, for the most part, we closely 
follow the experimental setup (including data sets, timeouts,
comparison metrics) used by~\citet{ScanagattaCZYK18} to compare \kmax\ with previous
approaches.


Since \bnslim\ needs an initial heuristic solution, we enlist either 
\kmax, \hc, or \hcp\ for this purpose. We denote by \bnslim(X), 
the algorithm which applies \bnslim\ on an initial solution provided by X
where X~$\in\{\text{\kmax, \hc, \hcp}\}$.
We run all our experiments with treewidth bounds 2, 5, 8 for each data set
following~\citet{ScanagattaCZYK18}. All reported \bnslim\ results are
averages over three random seeds (see supp. material for details).

\subsection{Setup}\label{subsec:exp-setup}
We run all our experiments on a 4-core Intel Xeon E5540 2.53 GHz CPU, with each
process having access to 8GB RAM. 
We use UWrMaxSat as the MaxSAT-solver primarily due to its anytime nature
(available at the 2019 MaxSAT Evaluation 
webpage\footnote{\fnurl{https://maxsat-evaluations.github.io/2019/descriptions.html}}).
We tried other solvers but found that UWrMaxSat works best for our use case.
We use the 
BNGenerator package~\cite{bngenerator} in conjunction with the BBNConvertor 
tool~\cite{bnconvertor} to generate and reformat random Bayesian Networks.
We also use the implementation of the \kmax\ algorithm available as a
part of the BLIP package~\cite{blippackage}.
For the \hcb\ algorithms we use the software made 
available\footnote{\fnurl{https://github.com/marcobb8/et-learn}} 
by~\citet{BenjumedaCamposLarranaga19}.
We implement the local improvement algorithm in Python 3.6.9, using the
NetworkX 2.4 graph library~\cite{networkxlib}.
\shortversion{%
The source code is attached as supplementary material, and we 
intend to make it publicly available.
}
\longversion{%
The source code along with the experiment data is available publicly
at~\url{https://github.com/aditya95sriram/bn-slim}.
}

We first conducted a preliminary analysis on 20 data sets
to find out the best values for the \emph{budget} (maximum number of random variables
in a subinstance) and the \emph{timeout} (per MaxSAT call) of \bnslim.
We tested out budget values 7, 10, and 17, and timeout
values~1\text{s}, 2\text{s}, and 5\text{s}, and finally settled on 
a budget of 10 and a timeout of 2 seconds for our experiments.

\subsection{Data sets}\label{subsec:datasets}
We consider 99 data sets for our experiments. 
84 of these come from \emph{real-world} benchmarks. 
These are based on the benchmarks introduced 
by~\citet{LowdJesse10,VanDavis12,BekkerDCDV15,LarochelleBengioTurian10},
a subset of which has been used by~\citet{ScanagattaCZYK18}.
These benchmarks are publicly available%
\footnote{\fnurl{https://github.com/arranger1044/DEBD}}
in the form of pre-partitioned data sets. 
There are three data sets corresponding to each of the 28 benchmarks 
(see Table~\ref{tab:real-datasets}).

The remaining 15 data sets are classified as \emph{synthetic} as they are obtained by 
drawing 5000 samples from known BNs (see Table~\ref{tab:synth-datasets}).
Five of these BNs are commonly used in
the literature as benchmarks\footnote{\fnurl{https://www.bnlearn.com/bnrepository/}},
and we generated the remaining 10 BNs randomly using the BNGenerator tool
with more random variables than the previously mentioned data sets.
Overall, the collection of data sets
provides a wide variety of the data's nature and the different
parameters.

Both \kmax\ and \bnslim\ take a score function cache as input,
while \hcb\ requires the samples themselves
and computes the required scores on-the-fly.
We thus compute the score function cache using the scoring module provided as 
a part of \hcb's source code. More specifically, we first obtain the parent set
tuples using independence selection (available in the BLIP package), and then
we recompute the scores for these tuples using \hcb's scoring module.
This cache is used as input to both \bnslim\ and \kmax.
This provides a level playing field and improves comparability
between the different algorithms.

While computing these score function caches, the scoring function module
was unable to process two data sets and hence we discarded these two data sets. The final list of data sets is
shown in Tables~\ref{tab:real-datasets} and~\ref{tab:synth-datasets}.
Further, \kmax\ crashes for 3 data sets and hence we disregard these
for any experiments involving \kmax\ or \kslim.
\begin{table}[htbp]
	\centering\small
\setlength{\tabcolsep}{5.1pt}
\begin{tabular}{@{}lclclc@{}}
\toprule
Name   & $n$   & Name & $n$   & Name & $n$    \\
\cmidrule(r){1-2} \cmidrule(lr){3-4} \cmidrule(l){5-6}
NLTCS     & \phantom{1}16  & Connect 4   & 126 & EachMovie     & \phantom{1}500  \\
MSNBC     & \phantom{1}17  & OCR Letters & 128 & WebKB         & \phantom{1}839  \\
KDDCup2k  & \phantom{1}65  & RCV-1       & 150 & Reuters-52    & \phantom{1}889  \\
Plants    & \phantom{1}69  & Retail      & 135 & 20 NewsGroup  & \phantom{1}910  \\
Audio     & 100 & Pumsb-star  & 163 & Movie reviews & 1001 \\
Jester    & 100 & DNA         & 180 & BBC           & 1058 \\
Netflix   & 100 & Kosarek     & 190 & Voting        & 1359 \\
Accidents & 111 & MSWeb       & 294 & Ad            & 1556 \\
Mushrooms & 112 & NIPS        & 500 &               &      \\
Adult     & 123 & Book        & 500 &               &      \\ 
\bottomrule
\end{tabular}

    \vspace{-5pt}
	\caption{Real data sets ($n$ is the number of random
          variables, the number of samples ranges from 100 to 291326)}
	\label{tab:real-datasets}
	\longversion{
	\end{table}
	\begin{table}[htbp]
		\centering\small
	}
    \shortversion{\vspace{3pt}}
\begin{tabular}{@{}lclclc@{}}
\toprule
Name     & $n$    & Name & $n$    & Name & $n$ \\ 
\cmidrule(r){1-2} \cmidrule(lr){3-4} \cmidrule(l){5-6}
andes    & 223  & r0   & 2000 & r5   & 4000 \\
diabetes & 413  & r1   & 2000 & r6   & 4000 \\
pigs     & 441  & r2   & 2000 & r7   & 4000 \\
link     & 724  & r3   & 2000 \\
munin    & 1041 & r4   & 2000 \\ \bottomrule
\end{tabular}
    \vspace{-5pt}
	\caption{Synthetic data sets
		($n$ \longversion{denotes}\shortversion{is} the number of random variables,
		5000 samples from each network)}
	\label{tab:synth-datasets}
    \vspace{-15pt}
\end{table}

\subsection{Evaluation metric}

For evaluating our algorithm's performance, we use the same metric
as~\citeauthor{ScanagattaCZYK18}, i.e., $\Delta$BIC, which is the difference between
the BIC scores of two solutions. Given a DAG $D$, the BIC score  
approximates  the logarithm of the marginal likelihood of~$D$. Thus, given
two DAGs $D_1$ and $D_2$, the difference in their BIC scores approximates the 
ratio of their respective marginal likelihoods which is the Bayes Factor~\cite{Raftery95}.
A positive $\Delta$BIC score signifies positive evidence towards $D_1$ and a
negative $\Delta$BIC score signifies positive evidence towards $D_2$. The 
$\Delta$BIC values can be mapped to a scale of qualitative categories~\cite{Raftery95} 
\shortversion{%
as follows: \\[5pt]
	{\small\input{tables/deltabic-categories.tex}\par}%
}
\longversion{%
as shown in Table~\ref{tab:deltabic}.

\begin{table}[htbp]
	\centering
\newcommand\crule[1]{\textcolor{#1}{\rule[-2pt]{10pt}{10pt}}}
\definecolor{cmap1}{HTML}{43AAC7}
\definecolor{cmap2}{HTML}{7DC4D8}
\definecolor{cmap3}{HTML}{B8DFEA}
\definecolor{cmap4}{HTML}{F2F2F2}
\definecolor{cmap5}{HTML}{FAC6C8}
\definecolor{cmap6}{HTML}{F79EA2}
\definecolor{cmap7}{HTML}{F4787E}
\setlength{\tabcolsep}{5.8pt}
\begin{tabular}{@{}lclc@{}}
\toprule
\crule{white} Category            & $\Delta$BIC      & \crule{white} Category            & $\Delta$BIC     \\
\cmidrule(r){1-2} \cmidrule(l){3-4}
\crule{cmap7} extremely negative  & $(-\infty, -10)$ & \crule{cmap1} extremely positive  & $(10, \infty)$  \\
\crule{cmap6} strongly negative   & $(-10,-6)$       & \crule{cmap2} strongly positive   & $(6, 10)$       \\
\crule{cmap5}  negative           & $(-6,-2)$        & \crule{cmap3} positive            & $(2, 6)$        \\
\bottomrule
\end{tabular}

	\caption{$\Delta$BIC category scale}
	\label{tab:deltabic}
\end{table}
}

\subsection{Experimental results}


The primary focus of our experimentation is to analyze the benefit
gained by applying \bnslim\ on top of other heuristics and not to
compare between the different heuristics. To this end,
we run \bnslim\ for 60 minutes on top of the initial solution
provided by \kmax, \hc, and \hcp\ and measure the time required for
\bnslim\ to obtain a solution that counts as extremely positive
evidence with respect to the initial solution.  The initial
solution by \kmax\ is the solution captured at the 30-minute mark,
whereas the initial solution by \hcb\ is the final solution obtained
upon termination.
The maximum time required for computing the
initial solution on any individual instance, by both \hc\ and \hcp,
is around 3.5 hours.  For comparison, we let \kmax\ continue
running for 60 more minutes after it has produced the
initial solution. 

\begin{figure*}[b]
	\centering
    \longversion{
        \includegraphics[width=\linewidth]{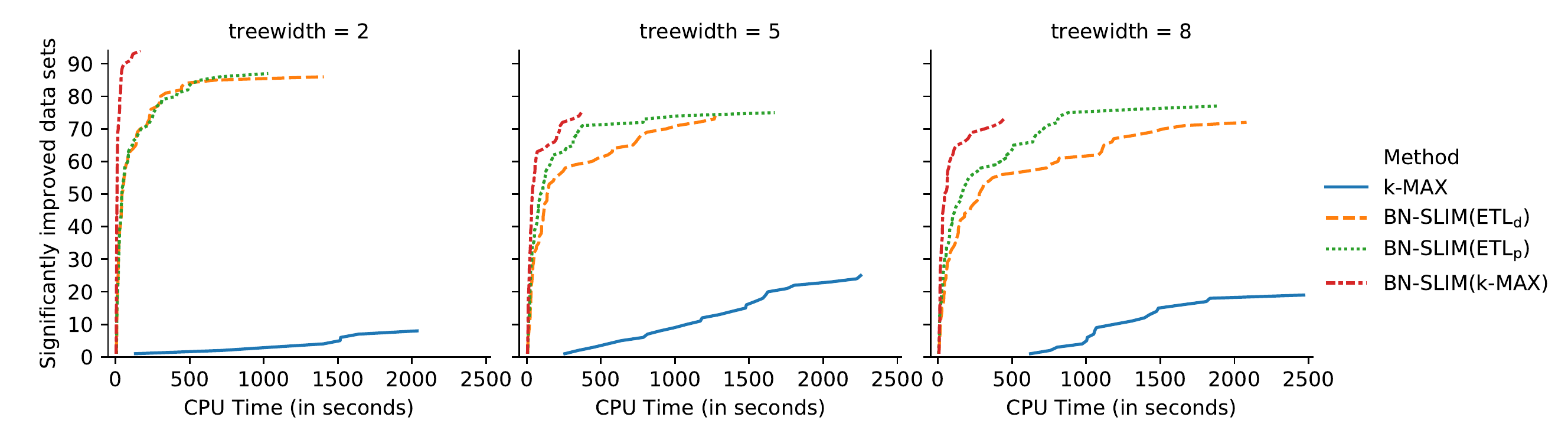}
        \vspace{-23pt}
    }
    \shortversion{
        \includegraphics[width=.9\linewidth]{figures/cdf-plot.pdf}
        \vspace{-12pt}
    }
	\caption{CDF plots showing the number of significantly improved
    data sets ($\Delta$BIC $\geq$ 10) across 94 data~sets}
	\label{fig:cactus}
    \vspace{-10pt}
\end{figure*}

Fig.~\ref{fig:cactus} shows the results of this analysis. 
We consider a data set to be \emph{significantly improved} if
\bnslim\ is able to improve by at least 10 BIC points over the initial
heuristic solution.
We observe that \bnslim\ improves over \kmax\ much more efficiently as
over \hcb. Giving \kmax\ more time for computing the
  initial solution increases this discrepancy even further, as the
  improvement rate of \kmax\ rapidly slows down after 30 minutes.
Averaging over all the heuristics, \bnslim\ can  produce a
solution with extremely positive evidence for 95\%, 79\%, and 78\%
of instances for treewidth bounds 2, 5, and 8, respectively.

Fig.~\ref{fig:slim-hc} shows the $\Delta$BIC values from comparing the
\hcbslim\ solution after 30 minutes to the corresponding initial
solution by \hcb.  We can see that \hcbslim\ can secure
extremely positive evidence for a significant number of data sets
across all tested treewidth bounds, with a smaller
treewidth being more favorable.

Due to the anytime nature of \kmax, we can compare it against
\kslim\ in a ``race.''  We run both simultaneously for one hour, where
out of the time allotted to \kslim, 30 minutes are used to generate
the initial solution,
and the remaining 30 minutes are used 
to improve this initial solution.  Fig.~\ref{fig:slim-kmax} shows the
$\Delta$BIC values of comparing \kmax\ and \kslim\ at the one hour
mark.  Similar to \hcbslim\ we observe that \kslim\ outperforms \kmax\
on a significant number of instances, and on all instances for treewidth 2.


The experimental evaluation demonstrates 
\bnslim\ approach's effectiveness
and the combined power as a heuristic method of \kslim\ and \hcbslim.



\begin{figure}[htb]
	\shortversion{%
        \vspace{-12pt}
		\centering
        \includegraphics[width=\linewidth]{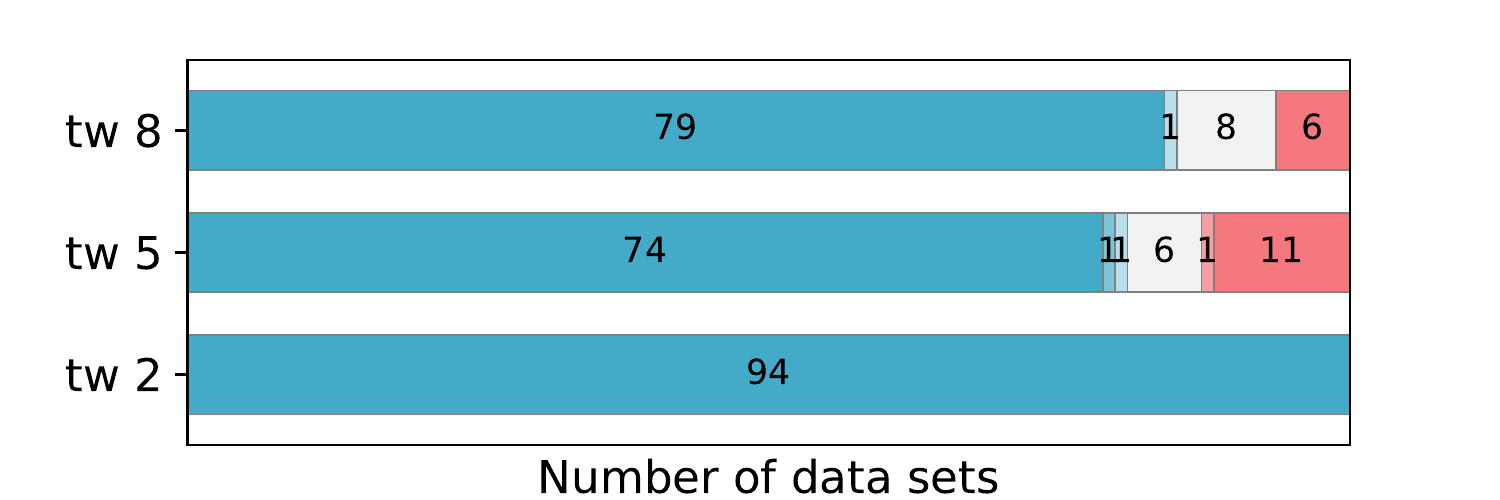}
        \vspace{-15pt}
	}
	\longversion{%
		\centering
		\includegraphics[width=.65\linewidth]{figures/slim-kmax-30m.pdf}
	}
	\caption{Comparison between \kslim\ and \kmax\ over 94 data sets
    \note{R2: avoid using color as only differentiator}}
	\label{fig:slim-kmax}
    \shortversion{\vspace{-15pt}}
\end{figure}

\begin{figure}[htb]
    \shortversion{\vspace{-10pt}}
	\centering
    \shortversion{\includegraphics[width=\linewidth]{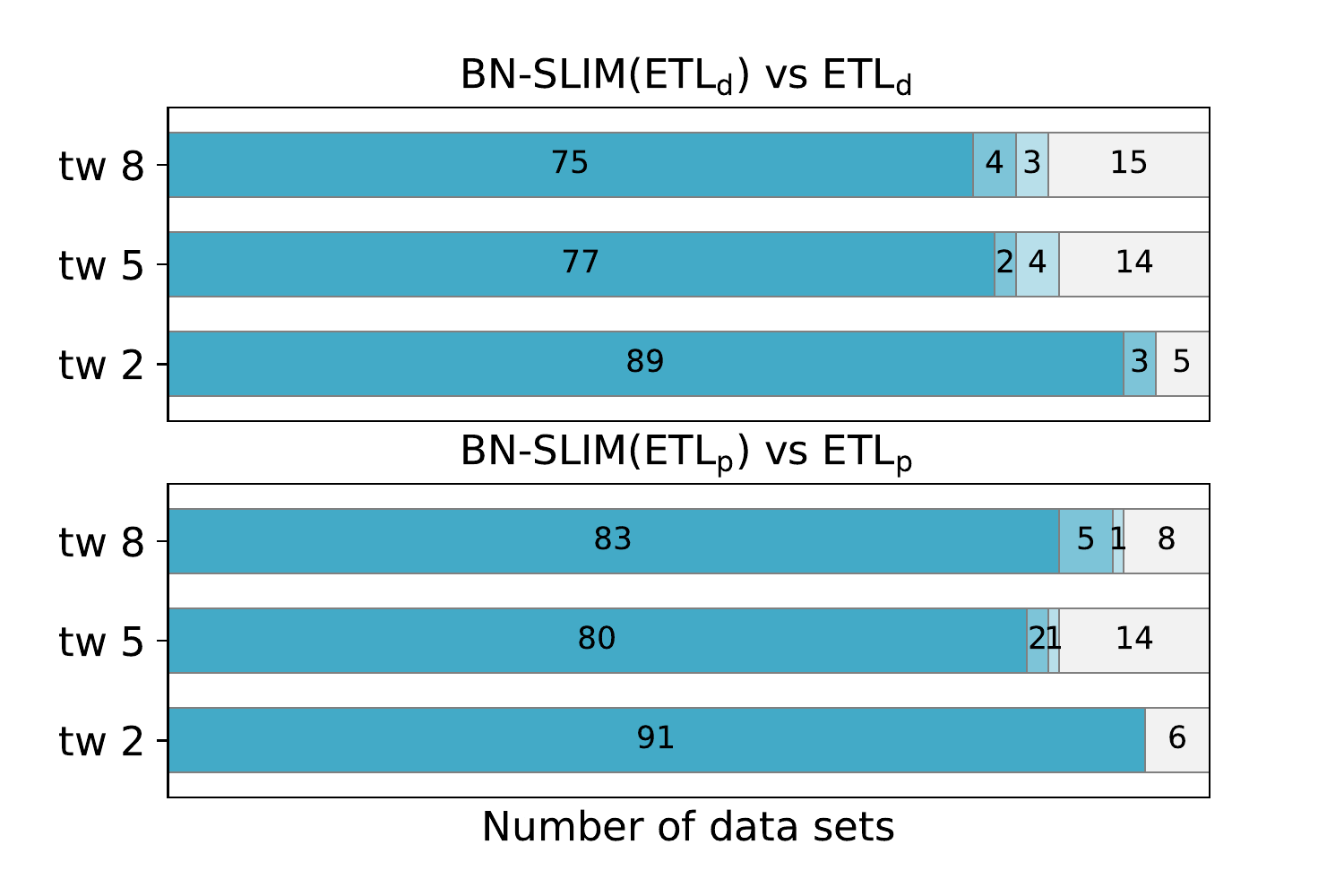}}
    \shortversion{\vspace{-23pt}}
    \longversion{\includegraphics[width=.65\linewidth]{figures/slim-hc-30m.pdf}}
	\caption{Comparison between \hcbslim\ and \hcb\ over 97 data sets}
	\label{fig:slim-hc}
    \shortversion{\vspace{-20pt}}
\end{figure}





\section{Conclusion}
With \bnslim, we have presented a novel method for improving the
outcome of treewidth-bounded BN structure learning heuristics. We have
demonstrated its robustness and performance by applying \bnslim\ to
the solution provided by the state-of-the-art heuristics \kmax, \hc,
and \hcp. The approach of \bnslim\ is based on exact reasoning via
MaxSAT, which is fundamentally different from the mentioned
heuristics. Consequently, both approaches complement each other, and
their combination provides significantly better solutions than any of
the heuristics alone.  Simultaneoulsy, the combination still scales to
large instances with thousands of random variables, which are far out
of reach for exact methods alone. Thus, \bnslim\ combines the best of
both worlds.

The highly encouraging experimental outcome suggests several avenues
for future work, which include the development of more sophisticated
subinstance selection schemes, the inclusion of variable fidelity
sampling (crude for the global solver, fine-grained for the local
solver), as well as more complex collaboration protocols between local
and global solver in a distributed setting.


  


\longversion{
\setlength{\bibsep}{0.5\baselineskip}
\bibliographystyle{plainnat}
}
\shortversion{
\section*{Acknowledgements}
The authors acknowledge the support by the FWF (projects P32441 and W1255) and by the WWTF (project ICT19-065).
}

\bibliography{literature}

\begin{thebibliography}{35}
\providecommand{\natexlab}[1]{#1}
\providecommand{\url}[1]{\texttt{#1}}
\expandafter\ifx\csname urlstyle\endcsname\relax
  \providecommand{\doi}[1]{doi: #1}\else
  \providecommand{\doi}{doi: \begingroup \urlstyle{rm}\Url}\fi

\bibitem[Akaike(1974)]{Akaike74}
Hirotugu Akaike.
\newblock A new look at the statistical model identification.
\newblock \emph{IEEE transactions on automatic control}, 19\penalty0
  (6):\penalty0 716--723, 1974.

\bibitem[Bekker et~al.(2015)Bekker, Davis, Choi, Darwiche, and Van~den
  Broeck]{BekkerDCDV15}
Jessa Bekker, Jesse Davis, Arthur Choi, Adnan Darwiche, and Guy Van~den Broeck.
\newblock Tractable learning for complex probability queries.
\newblock In \emph{Advances in Neural Information Processing Systems}, pages
  2242--2250, 2015.

\bibitem[Benjumeda et~al.(2019)Benjumeda, Bielza, and
  Larra{\~{n}}aga]{BenjumedaCamposLarranaga19}
Marco Benjumeda, Concha Bielza, and Pedro Larra{\~{n}}aga.
\newblock Learning tractable bayesian networks in the space of elimination
  orders.
\newblock \emph{Artificial Intelligence}, 274:\penalty0 66--90, 2019.

\bibitem[Berg et~al.(2014)Berg, J{\"{a}}rvisalo, and
  Malone]{BergJarvisaloMalone14}
Jeremias Berg, Matti J{\"{a}}rvisalo, and Brandon~M. Malone.
\newblock Learning optimal bounded treewidth bayesian networks via maximum
  satisfiability.
\newblock In \emph{Proceedings of the Seventeenth International Conference on
  Artificial Intelligence and Statistics, {AISTATS} 2014, Reykjavik, Iceland,
  April 22-25, 2014}, volume~33 of \emph{{JMLR} Workshop and Conference
  Proceedings}, pages 86--95. JMLR.org, 2014.

\bibitem[Bodlaender and M{\"{o}}hring(1993)]{BodlaenderMoehring93}
Hans~L. Bodlaender and Rolf~H. M{\"{o}}hring.
\newblock The pathwidth and treewidth of cographs.
\newblock \emph{SIAM J. Discrete Math.}, 6\penalty0 (2):\penalty0 181--188,
  1993.
\newblock \doi{10.1137/0406014}.

\bibitem[Chickering(1996)]{Chickering96}
David~Maxwell Chickering.
\newblock Learning equivalence classes of {B}ayesian network structures, 1996.

\bibitem[Elidan and Gould(2009)]{ElidanGould08}
Gal Elidan and Stephen Gould.
\newblock Learning bounded treewidth bayesian networks.
\newblock In Daphne Koller, Dale Schuurmans, Yoshua Bengio, and L{\'{e}}on
  Bottou, editors, \emph{Advances in Neural Information Processing Systems 21,
  Proceedings of the Twenty-Second Annual Conference on Neural Information
  Processing Systems, Vancouver, British Columbia, Canada, December 8-11,
  2008}, pages 417--424. Curran Associates, Inc., 2009.

\bibitem[Fichte et~al.(2017)Fichte, Lodha, and Szeider]{FichteLodhaSzeider17}
Johannes~K. Fichte, Neha Lodha, and Stefan Szeider.
\newblock {SAT}-based local improvement for finding tree decompositions of
  small width.
\newblock In Serge Gaspers and Toby Walsh, editors, \emph{Theory and
  Applications of Satisfiability Testing - {SAT} 2017 - 20th International
  Conference, Melbourne, VIC, Australia, August 28 - September 1, 2017,
  Proceedings}, volume 10491 of \emph{Lecture Notes in Computer Science}, pages
  401--411. Springer Verlag, 2017.
\newblock \doi{10.1007/978-3-319-66263-3_25}.

\bibitem[Fichte et~al.(2018)Fichte, Hecher, Lodha, and
  Szeider]{FichteHecherLodhaSzeider18}
Johannes~K. Fichte, Markus Hecher, Neha Lodha, and Stefan Szeider.
\newblock An {SMT} approach to fractional hypertree width.
\newblock In John~N. Hooker, editor, \emph{Proceedings of CP 2018, the 24rd
  International Conference on Principles and Practice of Constraint
  Programming}, volume 11008 of \emph{Lecture Notes in Computer Science}, pages
  109--127. Springer Verlag, 2018.
\newblock \doi{10.1007/978-3-319-98334-9_8}.

\bibitem[Ganian et~al.(2019)Ganian, Lodha, Ordyniak, and
  Szeider]{GanianLodhaOrdyniakSzeider19}
Robert Ganian, Neha Lodha, Sebastian Ordyniak, and Stefan Szeider.
\newblock {SAT}-encodings for treecut width and treedepth.
\newblock In Stephen~G. Kobourov and Henning Meyerhenke, editors,
  \emph{Proceedings of ALENEX 2019, the 21st Workshop on Algorithm Engineering
  and Experiments}, pages 117--129. SIAM, 2019.
\newblock \doi{10.1137/1.9781611975499.10}.

\bibitem[Guo(2002)]{bnconvertor}
Haipeng Guo.
\newblock {BBNConvertor -- Bayesian Networks Formats Convertor}, 2002.
\newblock URL
  \url{http://kdd.cs.ksu.edu/KDD/Groups/Probabilistic-Reasoning/convertor.html}.

\bibitem[Hagberg et~al.(2008)Hagberg, Schult, and Swart]{networkxlib}
Aric~A. Hagberg, Daniel~A. Schult, and Pieter~J. Swart.
\newblock Exploring network structure, dynamics, and function using {NetworkX}.
\newblock In \emph{Proceedings of the 7th Python in Science Conference
  (SciPy2008)}, pages 11--15, Pasadena, CA USA, August 2008.

\bibitem[Heckerman et~al.(1995)Heckerman, Geiger, and
  Chickering]{HeckermanGeigerChickering95}
David Heckerman, Dan Geiger, and David~Maxwell Chickering.
\newblock Learning {B}ayesian networks: The combination of knowledge and
  statistical data.
\newblock \emph{Machine Learning}, 20\penalty0 (3):\penalty0 197--243, 1995.

\bibitem[Ide(2015)]{bngenerator}
Jaime~S. Ide.
\newblock {BNGenerator -- A generator for random Bayesian network}, 2015.
\newblock URL \url{http://sites.poli.usp.br/pmr/ltd/Software/BNGenerator}.

\bibitem[Koller and Friedman(2009)]{KollerFriedman09}
Daphne Koller and Nir Friedman.
\newblock \emph{Probabilistic graphical models: principles and techniques}.
\newblock MIT press, 2009.

\bibitem[Korhonen and Parviainen(2013)]{KorhonenParviainen13}
Janne~H. Korhonen and Pekka Parviainen.
\newblock Exact learning of bounded tree-width bayesian networks.
\newblock In \emph{Proceedings of the Sixteenth International Conference on
  Artificial Intelligence and Statistics, {AISTATS} 2013, Scottsdale, AZ, USA,
  April 29 - May 1, 2013}, volume~31 of \emph{{JMLR} Workshop and Conference
  Proceedings}, pages 370--378. JMLR.org, 2013.

\bibitem[Kwisthout et~al.(2010)Kwisthout, Bodlaender, and van~der
  Gaag]{KwisthoutBodlaenderGaag10}
Johan Kwisthout, Hans~L. Bodlaender, and Linda~C. van~der Gaag.
\newblock The necessity of bounded treewidth for efficient inference in
  bayesian networks.
\newblock In Helder Coelho, Rudi Studer, and Michael Wooldridge, editors,
  \emph{ECAI 2010 - 19th European Conference on Artificial Intelligence,
  Lisbon, Portugal, August 16-20, 2010, Proceedings}, volume 215 of
  \emph{Frontiers in Artificial Intelligence and Applications}, pages 237--242.
  IOS Press, 2010.

\bibitem[Larochelle et~al.(2010)Larochelle, Bengio, and
  Turian]{LarochelleBengioTurian10}
Hugo Larochelle, Yoshua Bengio, and Joseph Turian.
\newblock Tractable multivariate binary density estimation and the restricted
  boltzmann forest.
\newblock \emph{Neural computation}, 22\penalty0 (9):\penalty0 2285--2307,
  2010.

\bibitem[Lodha et~al.(2016)Lodha, Ordyniak, and
  Szeider]{LodhaOrdyniakSzeider16}
Neha Lodha, Sebastian Ordyniak, and Stefan Szeider.
\newblock A {SAT} approach to branchwidth.
\newblock In Nadia Creignou and Daniel~Le Berre, editors, \emph{Theory and
  Applications of Satisfiability Testing - {SAT} 2016 - 19th International
  Conference, Bordeaux, France, July 5-8, 2016, Proceedings}, volume 9710 of
  \emph{Lecture Notes in Computer Science}, pages 179--195. Springer Verlag,
  2016.
\newblock \doi{10.1007/978-3-319-40970-2_12}.

\bibitem[Lodha et~al.(2017)Lodha, Ordyniak, and
  Szeider]{LodhaOrdyniakSzeider17}
Neha Lodha, Sebastian Ordyniak, and Stefan Szeider.
\newblock {SAT}-encodings for special treewidth and pathwidth.
\newblock In Serge Gaspers and Toby Walsh, editors, \emph{Theory and
  Applications of Satisfiability Testing - {SAT} 2017 - 20th International
  Conference, Melbourne, VIC, Australia, August 28 - September 1, 2017,
  Proceedings}, volume 10491 of \emph{Lecture Notes in Computer Science}, pages
  429--445. Springer Verlag, 2017.
\newblock \doi{10.1007/978-3-319-66263-3_27}.
\newblock URL \url{http://www.ac.tuwien.ac.at/files/tr/ac-tr-17-012.pdf}.

\bibitem[Lodha et~al.(2019)Lodha, Ordyniak, and
  Szeider]{LodhaOrdyniakSzeider19}
Neha Lodha, Sebastian Ordyniak, and Stefan Szeider.
\newblock A {SAT} approach to branchwidth.
\newblock \emph{ACM Trans. Comput. Log.}, 20\penalty0 (3):\penalty0
  15:1--15:24, 2019.
\newblock \doi{10.1145/3326159}.
\newblock URL \url{http://www.ac.tuwien.ac.at/files/tr/ac-tr-19-010.pdf}.

\bibitem[Lowd and Davis(2010)]{LowdJesse10}
Daniel Lowd and Jesse Davis.
\newblock Learning markov network structure with decision trees.
\newblock In \emph{2010 IEEE International Conference on Data Mining}, pages
  334--343. IEEE, 2010.

\bibitem[Nie et~al.(2015)Nie, de~Campos, and Ji]{NieCamposJi15}
Siqi Nie, Cassio~Polpo de~Campos, and Qiang Ji.
\newblock Learning bounded tree-width bayesian networks via sampling.
\newblock In S{\'{e}}bastien Destercke and Thierry Denoeux, editors,
  \emph{Symbolic and Quantitative Approaches to Reasoning with Uncertainty -
  13th European Conference, {ECSQARU} 2015, Compi{\`{e}}gne, France, July
  15-17, 2015. Proceedings}, volume 9161 of \emph{Lecture Notes in Computer
  Science}, pages 387--396. Springer Verlag, 2015.

\bibitem[Nie et~al.(2016)Nie, de~Campos, and Ji]{NieCamposJi16}
Siqi Nie, Cassio~Polpo de~Campos, and Qiang Ji.
\newblock Learning bayesian networks with bounded tree-width via guided search.
\newblock In Dale Schuurmans and Michael~P. Wellman, editors, \emph{Proceedings
  of the Thirtieth {AAAI} Conference on Artificial Intelligence, February
  12-17, 2016, Phoenix, Arizona, {USA}}, pages 3294--3300. {AAAI} Press, 2016.

\bibitem[Parviainen et~al.(2014)Parviainen, Farahani, and
  Lagergren]{ParviainenFarahaniLagergren14}
Pekka Parviainen, Hossein~Shahrabi Farahani, and Jens Lagergren.
\newblock Learning bounded tree-width bayesian networks using integer linear
  programming.
\newblock In \emph{Proceedings of the Seventeenth International Conference on
  Artificial Intelligence and Statistics, {AISTATS} 2014, Reykjavik, Iceland,
  April 22-25, 2014}, volume~33 of \emph{{JMLR} Workshop and Conference
  Proceedings}, pages 751--759. JMLR.org, 2014.

\bibitem[Peruvemba~Ramaswamy and Szeider(2020)]{VaidyanathanSzeider2020}
Vaidyanathan Peruvemba~Ramaswamy and Stefan Szeider.
\newblock Maxsat-based postprocessing for treedepth.
\newblock In Helmut Simonis, editor, \emph{Principles and Practice of
  Constraint Programming}, pages 478--495, Cham, 2020. Springer International
  Publishing.
\newblock ISBN 978-3-030-58475-7.

\bibitem[Raftery(1995)]{Raftery95}
Adrian~E. Raftery.
\newblock Bayesian model selection in social research.
\newblock \emph{Sociological Methodology}, 25:\penalty0 111--163, 1995.
\newblock ISSN 00811750, 14679531.
\newblock URL \url{http://www.jstor.org/stable/271063}.

\bibitem[Samer and Veith(2009)]{SamerVeith09}
Marko Samer and Helmut Veith.
\newblock Encoding treewidth into {SAT}.
\newblock In \emph{Theory and Applications of Satisfiability Testing - SAT
  2009, 12th International Conference, SAT 2009, Swansea, UK, June 30 - July 3,
  2009. Proceedings}, volume 5584 of \emph{Lecture Notes in Computer Science},
  pages 45--50. Springer Verlag, 2009.

\bibitem[Scanagatta(2015)]{blippackage}
Mauro Scanagatta.
\newblock {BLIP -- Bayesian Network learning and inference package }, 2015.
\newblock URL \url{https://ipg.idsia.ch/software/blip}.

\bibitem[Scanagatta et~al.(2016)Scanagatta, Corani, de~Campos, and
  Zaffalon]{ScanagattaCoraniCamposZaffalon16}
Mauro Scanagatta, Giorgio Corani, Cassio~Polpo de~Campos, and Marco Zaffalon.
\newblock Learning treewidth-bounded bayesian networks with thousands of
  variables.
\newblock In Daniel~D. Lee, Masashi Sugiyama, Ulrike von Luxburg, Isabelle
  Guyon, and Roman Garnett, editors, \emph{Advances in Neural Information
  Processing Systems 29: Annual Conference on Neural Information Processing
  Systems 2016, December 5-10, 2016, Barcelona, Spain}, pages 1462--1470, 2016.

\bibitem[Scanagatta et~al.(2018)Scanagatta, Corani, Zaffalon, Yoo, and
  Kang]{ScanagattaCZYK18}
Mauro Scanagatta, Giorgio Corani, Marco Zaffalon, Jaemin Yoo, and U~Kang.
\newblock Efficient learning of bounded-treewidth bayesian networks from
  complete and incomplete data sets.
\newblock \emph{Int. J. Approx. Reason}, 95:\penalty0 152--166, 2018.

\bibitem[Schidler and Szeider(2020)]{SchidlerSzeider20}
Andr\'{e} Schidler and Stefan Szeider.
\newblock Computing optimal hypertree decompositions.
\newblock In Guy Blelloch and Irene Finocchi, editors, \emph{Proceedings of
  ALENEX 2020, the 22nd Workshop on Algorithm Engineering and Experiments},
  pages 1--11. SIAM, 2020.

\bibitem[Schidler and Szeider(2021)]{SchidlerSzeider21}
Andr\'{e} Schidler and Stefan Szeider.
\newblock {SAT-}based decision tree learning for large data sets.
\newblock In \emph{Proceedings of AAAI'21, the Thirty-Fifth {AAAI} Conference
  on Artificial Intelligence}. {AAAI} Press, 2021.

\bibitem[Schwarz(1978)]{Schwarz78}
Gideon Schwarz.
\newblock Estimating the dimension of a model.
\newblock \emph{The annals of statistics}, 6\penalty0 (2):\penalty0 461--464,
  1978.

\bibitem[Van~Haaren and Davis(2012)]{VanDavis12}
Jan Van~Haaren and Jesse Davis.
\newblock Markov network structure learning: A randomized feature generation
  approach.
\newblock In \emph{Twenty-Sixth AAAI Conference on Artificial Intelligence},
  2012.

\end{thebibliography}

\end{document}
